\newcommand{\comment}[1]{}
	\renewcommand{\correct}{\ShowUpdate}
\begin{document}

\title{Asymptotic Analysis of Generative Semi-Supervised Learning}
\author{Joshua V Dillon\textsuperscript{*}}
\author{Krishnakumar Balasubramanian}
\author{Guy Lebanon}
\affil{
	School of Computational Science \& Engineering \\ 
	College of Computing \\
	Georgia Institute of Technology \\ Atlanta, Georgia
}
\date{\today}
\maketitle
\thispagestyle{empty}
\let\oldthefootnote\thefootnote
\renewcommand{\thefootnote}{\fnsymbol{footnote}}
\footnotetext[1]{To whom correspondence should be addressed. Email: \url{jvdillon@gatech.edu}}
\let\thefootnote\oldthefootnote

\begin{abstract}
Semisupervised learning has emerged as a popular framework for improving
modeling accuracy while controlling labeling cost. Based on an extension of
stochastic composite likelihood we quantify the asymptotic accuracy of
generative semi-supervised learning.
\correct{This allows us to measure}
{In doing so, we complement distribution-free analysis by providing an
alternative framework to measure}
the value associated with different labeling policies and resolve the
fundamental question of how much data to label and in what manner.  We
demonstrate our \correct{framework}{approach} with both simulation studies and real world
experiments using naive Bayes for text classification and MRF\correct{/}{s and }CRFs for
structured prediction in NLP. 
\end{abstract}

\section{Introduction}
Semisupervised learning (SSL) is a technique for estimating statistical models
using both labeled and unlabeled data. It is particularly useful when the costs
of obtaining labeled and unlabeled samples are different. In particular,
assuming that unlabeled data is more easily available\correct{}{,} SSL provides
improved modeling accuracy by adding a large number of unlabeled samples to a
relatively small labeled dataset.

\correct{Despite the fact that SSL is an important research area, there have
been relatively few successes in mathematically quantifying how much SSL
improves modeling beyond traditional supervised techniques.}{ The practical
value of SSL has motivated several attempts to mathematically quantify its
value beyond traditional supervised techniques.  }
Of particular importance is the dependency of that improvement on the amount of
unlabeled and labeled data.  In the case of structured prediction the accuracy
of the SSL estimator depends also on the specific manner in which sequences are
labeled. Focusing on the framework of generative or likelihood-based SSL
applied to classification and structured prediction we identify the following
questions which we address in this paper.

\noindent \emph{Q1: Consistency (classification).} What combinations of labeled
and unlabeled data lead to precise models in the limit of large data.

\noindent \emph{Q2: Accuracy (classification).} How can we quantitatively
express the estimation accuracy for a particular generative model as a function
of the amount of labeled and unlabeled data. What is the improvement in
estimation accuracy resulting from replacing an unlabeled example with a
labeled one. 

\noindent \emph{Q3: Consistency (structured prediction).} What strategies for
sequence labeling lead to precise models in the limit of large data.

\noindent \emph{Q4: Accuracy (structured prediction).} How can we
quantitatively express the estimation quality for a particular model and
structured labeling strategy. What is the improvement in estimation accuracy
resulting from replacing one labeling strategy with another.

\noindent \emph{Q5: Tradeoff (classification and structured prediction).} How
can we quantitatively express the tradeoff between the two competing goals of
improved prediction accuracy and low labeling cost. What are the possible ways
to resolve that tradeoff optimally within a problem-specific context.

\noindent \emph{Q6: Practical Algorithms.} How can we determine how much data
to label in practical settings. 
  
The first five questions are of fundamental importance to SSL theory. Recent
related work has concentrated on large deviation bounds for discriminative SSL
as a response to Q1 and Q2 above. While enjoying broad applicability, such
non-parametric bounds \correct{may not be very informative due to their
generality as they apply to a wide range of models and situations}{are weakened
when the model family's worst-case is atypical}. \correct{Our approach
complements these efforts by providing asymptotic analysis without the use of
bounds that apply to the specific generative models under consideration.}{By
forgoing finite sample analysis, our approach complements these efforts and
provides insights which apply to the specific generative models under
consideration.} \correct{The last question above, Q6, complements the
mathematical analysis of Q1-Q5 with practical algorithms that can be applied in
practice.}{In presenting answers to the last question, we reveal the relative
merits of asymptotic analysis and how its employ, perhaps surprisingly, renders
practical heuristics for controlling labeling cost.}

Our asymptotic \correct{derivation}{derivations} are possible by extending the
recently proposed stochastic composite likelihood formalism~\cite{Dillon2009a}
and showing that generative SSL is a special case of that extension. The
\correct{derivations}{implications of this analysis} are demonstrated using a
simulation study as well as text classification and NLP structured prediction
experiments.  \correct{Much of the}{The} developed framework, however, is
general enough to apply to any generative SSL
problem.  \correct{}{As in \cite{Liang2008}, the delta method
transforms our results from parameter asymptotics to prediction risk asymptotics. We omit these results for lack of space.}

\section{Related Work} 
Semisupervised learning has received much attention in the past decade. Perhaps
the first study in this area was done by Castelli and Cover \cite{Castelli1996} who examined the convergence of the classification error rate as
a labeled example is added to an unlabeled dataset drawn from a Gaussian
mixture model. Nigam et al.\ \cite{Nigam2000} proposed a practical SSL framework
based on maximizing the likelihood of the observed data. An edited volume describing more recent developments is \cite{ChaSchZie06}.

The goal of theoretically quantifying the effect of SSL has \correct{}{recently} gained increased
attention. Sinha and Belkin~\cite{Sinhanips07} examined the effect of
using unlabeled samples with imperfect models for mixture models. Balcan and Blum~\cite{Balcan2010} and Singh et al. \cite{singh2008unlabeled} analyze
discriminative SSL using PAC theory and large deviation bounds.  
\correct{}{
Additional analysis has been conducted under specific distributional
assumptions such as the ``cluster assumption'',
``smoothness assumption'' and the ``low density assumption.''\cite{ChaSchZie06}
However, many of these assumptions are criticized in \cite{Bendavid2008}.}

Our work complements the above studies in that we focus on generative as
opposed to discriminative SSL. In contrast to most other studies, we derive
model specific asymptotics as opposed to non-parametric large deviation bounds.
While such bounds are helpful as they apply to a broad set of cases, they also
provide less information than model-based analysis due to their generality. Our
analysis, on the other hand, requires knowledge of the specific model family
and an estimate of the model parameter. The resulting asymptotics, however,
apply specifically to the case at hand without the need of potentially loose
bounds.

We believe that our work is the first to consider and answer questions Q1-Q6 in
the context of generative SSL. In particular, our work provides a new framework
for examining the accuracy-cost SSL tradeoff in a way that is quantitative,
practical, and model-specific.
  
\section{Stochastic SSL Estimators}
Generative SSL \cite{Nigam2000,ChaSchZie06} estimates a parametric model by maximizing the observed likelihood incorporating $L$ labeled and $U$ unlabeled examples 
\begin{align}\label{eq:ll0}
\ell(\theta) = \sum_{i=1}^L \log p_{\theta}(X^{(i)},Y^{(i)}) + \sum_{i=L+1}^{L+U} \log p_{\theta}(X^{(i)})
\end{align} 
where $p_{\theta}(X^{(i)})$ above is obtained by marginalizing the latent label $\sum_y p_{\theta}(X^{(i)},y)$. A classical example is the naive Bayes model in \cite{Nigam2000} where $p_{\theta}(X,Y)=p_{\theta}(X|Y) p(Y)$,  $p_{\theta}(X|Y=y)=\text{Mult}([\theta_y]_1,\ldots,[\theta_y]_V)$. The framework, however, is general enough to apply to any generative model $p_{\theta}(X,Y)$. 
  
To analyze the asymptotic behavior of the maximizer of \eqref{eq:ll0} we assume that the ratio between labeled to unlabeled examples $\lambda=L/(L+U)$ is kept constant while $n=L+U\to\infty$. More generally, we assume a stochastic version of \eqref{eq:ll0} where each one of the $n$ samples $X^{(1)},\ldots,X^{(n)}$ is labeled with probability $\lambda$
\begin{align} \label{eq:ll1}
\ell_n(\theta)&=\sum_{i=1}^n Z^{(i)} \log p_{\theta}(X^{(i)},Y^{(i)}) 
+ \sum_{i=1}^n (1-Z^{(i)}) \log p_{\theta}(X^{(i)}), \quad Z^{(i)}\sim \text{Bin}(1,\lambda). 
\end{align}
The variable $Z^{(i)}$ above is an indicator taking the value 1 with probability $\lambda$ and 0 otherwise. Due to the law of large numbers for large $n$ we will have approximately $L=n\lambda$ labeled samples and $U=n(1-\lambda)$ unlabeled samples thus achieving the asymptotic behavior of \eqref{eq:ll0}. 

Equation~\eqref{eq:ll1} is sufficient to handle the case of classification. However, in the case of structured prediction we may have sequences $X^{(i)},Y^{(i)}$ where for each $i$ some components of the label sequence $Y^{(i)}$ are missing and some are observed. For example one label sequence may be completely observed, another may be  completely unobserved, and a third may have the first half labeled and the second half not.

More formally, we assume the existence of a sequence labeling policy or strategy $\wp$ which maps label sequences  $Y^{(i)}=(Y^{(i)}_1,\ldots,Y^{(i)}_m)$ to a subset corresponding to the observed labels $\wp(Y^{(i)}) \subset \{Y^{(i)}_1,\ldots,Y^{(i)}_m\}$. To achieve full generality we allow the labeling policy $\wp$ to be stochastic, leading to different subsets of $\{Y^{(i)}_1,\ldots,Y^{(i)}_m\}$ with different probabilities. A simple ``all or nothing'' labeling policy could label the entire sequence with probability $\lambda$ and otherwise ignore it. Another policy may label the entire sequence, the first half, or ignore it completely with equal probabilities
\begin{align}
\wp(Y)\!=\!\begin{cases}
Y^{(i)}_1,\ldots,Y^{(i)}_m &\!\!\!\!\!\! \text{ with probability }1/3\\
\emptyset &\!\!\!\!\!\! \text{ with probability }1/3\\
Y^{(i)}_1,\ldots,Y^{(i)}_{\lfloor m/2\rfloor} &\!\!\!\!\!\! \text{ with probability }1/3
\end{cases}. \label{eq:policyExample}
\end{align}

We thus have the following generalization of \eqref{eq:ll1} for structured prediction 
\begin{align} \label{eq:ll2}
\ell_n(\theta) &= \sum_{i=1}^n \log p_{\theta}(\wp(Y^{(i)}),X^{(i)}).
\end{align}
Equation~\eqref{eq:ll2} generalizes standard SSL from all or nothing labeling to arbitrary labeling policies. The fundamental SSL question in this case is not simply what is the dependency of the estimation accuracy on $n$ and $\lambda$. Rather we ask what is the dependency of the estimation accuracy on the labeling policy $\wp$. Of particular interest is the question what labeling policies $\wp$ achieve high estimation accuracy coupled with low labeling cost. Answering these questions leads to a generative SSL theory that quantitatively balances estimation accuracy and labeling cost.

Finally, we note that both \eqref{eq:ll1} and \eqref{eq:ll2} are random variables whose outcomes depend on the random variables $Z^{(1)},\ldots,Z^{(n)}$ (for \eqref{eq:ll1}) or  $\wp$ (for \eqref{eq:ll2}). Consequentially, the analysis of the maximizer $\hat\theta_n$ of \eqref{eq:ll1} or \eqref{eq:ll2} needs to be done in a probabilistic manner.

\section{A1: Consistency (Classification)} \label{sec:A1}
Assuming that the data is generated from $p_{\theta_0}(X,Y)$ consistency corresponds to the convergence of 
\begin{align} \label{eq:mle}
\hat\theta_n=\argmax_{\theta} \ell_n(\theta)
\end{align}
to $\theta_0$ with probability 1 as $n\to\infty$ ($\ell_n$ is defined in \eqref{eq:ll1}). This implies that in the limit of large data our estimator would converge to the truth. Note that large data $n\to\infty$ in this case means that both labeled and unlabeled data increase to $\infty$ (but their relative sizes remain the constant $\lambda$). 

We show in this section that the maximizer of \eqref{eq:ll1} is consistent assuming that $\lambda>0$. This is not an unexpected conclusion but for the sake of completeness we prove it here rigorously. The proof technique will also be used later when we discuss consistency of SSL estimators for structured prediction. 

The central idea in the proof is to cast the generative SSL estimation problem as an extension of stochastic composite likelihood  \cite{Dillon2009a}. Our proof follows similar lines to the consistency proof of \cite{Dillon2009a} with the exception that it does not assume independence of the indicator functions $Z^{(i)}$ and $(1-Z^{(i)})$ as is assumed there.

\begin{defn}
A distribution $p_{\theta}(X,Y)$ is said to be identifiable if $\theta\neq\eta$ entails that $p_{\theta}(X,Y)-p_{\eta}(X,Y)$ is not identically zero. 
\end{defn}
\begin{prop} \label{prop:consistency1}
Let $\Theta\subset \R^r$ be a compact set, and $p_{\theta}(x,y)>0$ be identifiable and smooth in $\theta$. Then if $\lambda>0$ the maximizer $\hat\theta_n$ of \eqref{eq:ll1} is consistent i.e.,  $\hat\theta_n\to\theta_0$ as $n\to\infty$ with probability 1.
\end{prop}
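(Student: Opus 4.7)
The plan is to follow the classical M-estimator consistency recipe: establish uniform convergence of $n^{-1}\ell_n(\theta)$ to a population objective $M(\theta)$ on the compact set $\Theta$, and then show that $\theta_0$ is the unique maximizer of $M$. Because the triples $(X^{(i)},Y^{(i)},Z^{(i)})$ are i.i.d.\ across $i$---with $(X^{(i)},Y^{(i)})\sim p_{\theta_0}$ and $Z^{(i)}\sim\text{Bin}(1,\lambda)$ drawn independently of them---the deterministic dependence between $Z^{(i)}$ and $1-Z^{(i)}$ \emph{within} a single sample does not enter the asymptotics. This is precisely how I would sidestep the cross-indicator independence assumption used in \cite{Dillon2009a}.

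Concretely, set $m(x,y,z;\theta):= z\log p_\theta(x,y) + (1-z)\log p_\theta(x)$, so that $\ell_n(\theta)=\sum_{i=1}^n m(X^{(i)},Y^{(i)},Z^{(i)};\theta)$. The pointwise strong law of large numbers gives $n^{-1}\ell_n(\theta)\to M(\theta)$ almost surely, where
\begin{align*}
M(\theta) \,=\, \lambda\, \mathbb{E}_{\theta_0}\log p_\theta(X,Y) + (1-\lambda)\, \mathbb{E}_{\theta_0}\log p_\theta(X).
\end{align*}
To upgrade pointwise to uniform convergence on $\Theta$, I would invoke a Wald/Jennrich-type uniform strong law: smoothness of $\theta\mapsto p_\theta$ and compactness of $\Theta$ make $m(x,y,z;\cdot)$ continuous, and together with an integrable envelope $\mathbb{E}_{\theta_0}\sup_{\theta\in\Theta}|\log p_\theta(X,Y)|<\infty$ this yields $\sup_{\theta\in\Theta}|n^{-1}\ell_n(\theta)-M(\theta)|\to 0$ almost surely.

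For the uniqueness of the maximizer, apply Gibbs' inequality twice: $\mathbb{E}_{\theta_0}\log\frac{p_\theta(X,Y)}{p_{\theta_0}(X,Y)}\le 0$ and $\mathbb{E}_{\theta_0}\log\frac{p_\theta(X)}{p_{\theta_0}(X)}\le 0$, each with equality iff the corresponding densities agree $p_{\theta_0}$-a.s. Identifiability of $p_\theta(X,Y)$ forces the \emph{joint} inequality to be strict whenever $\theta\neq\theta_0$, and because $\lambda>0$ this strictness transfers to $M(\theta)<M(\theta_0)$. Thus $\theta_0$ is a well-separated maximizer of the continuous function $M$ on the compact set $\Theta$, and the standard argmax theorem (e.g.\ van der Vaart, Thm.\ 5.7) delivers $\hat\theta_n\to\theta_0$ almost surely.

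The main obstacle is the uniform SLLN step: the hypotheses explicitly listed---compactness, positivity, smoothness, identifiability---are not quite enough on their own; one additional integrability/envelope condition is needed to tame the tails of $\log p_\theta$. For the generative models of interest in this paper (naive Bayes on a finite vocabulary, MRFs/CRFs on finite label alphabets with parameters in a compact set) this envelope is essentially automatic because $\log p_\theta$ is bounded jointly in $(x,y,\theta)$, so I would either note this as a tacit regularity assumption or invoke the specific structure of the exponential family at hand. A smaller but conceptually important point is that the role of $\lambda>0$ is exactly to preserve the strict inequality coming from \emph{joint} identifiability; the marginal $p_\theta(X)$ need not be identifiable, and in general is not, which is precisely why purely unlabeled data ($\lambda=0$) would break consistency.
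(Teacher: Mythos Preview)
Your proposal is correct and follows essentially the same route as the paper: a uniform strong law on the compact parameter set (with an integrable envelope for $\log p_\theta$) together with Gibbs' inequality/KL nonnegativity to identify $\theta_0$ as the unique maximizer of the population objective, using $\lambda>0$ to transfer strictness from the identifiable joint term. The only cosmetic differences are that the paper centers the objective to write the limit directly as a negative weighted sum of KL divergences and finishes with an annulus argument rather than citing the argmax theorem; your explicit flagging of the envelope condition as an extra regularity hypothesis is, if anything, more careful than the paper's treatment.
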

\begin{proof}
The likelihood function, modified slightly by a linear combination with a constant is $\ell_n'(\theta)=$
\begin{align*}
&\frac{1}{n}\sum_{i=1}^n \left(Z^{(i)} \log p_{\theta}(X^{(i)},Y^{(i)})-\lambda \log p_{\theta_0}(X^{(i)},Y^{(i)})\right) 
+ \frac{1}{n}\sum_{i=1}^n\left((1-Z^{(i)}) \log p_{\theta}(X^{(i)})
-(1-\lambda)\log p_{\theta_0}(X^{(i)})\right),
\end{align*}
converges by the the strong law of large numbers as $n\to\infty$ to its expectation with probability 1
\begin{align*}
\mu(\theta) =-\lambda D(p_{\theta_0}(X,Y)||p_{\theta}(X,Y)) - (1-\lambda) D(p_{\theta_0}(X)||p_{\theta}(X))).
\end{align*}

If we restrict ourselves to the compact set $S=\{\theta: c_1 \leq \|\theta-\theta_0\|\leq c_2\}$ then $|\log p_{\theta}(X,Y)|<K(X,Y)<\infty,\, \forall\theta\in S$. As a result, the conditions for the uniform strong law of large numbers, cf. \correct{}{chapter 16 of} \cite{Ferguson1996}, hold on $S$ leading to 
\begin{align} \label{eq:ulln}
 P\left\{\lim_{n\to\infty} \,\sup_{\theta\in S}\, |\ell_n'(\theta)-\mu(\theta)|=0\right\}=1.
\end{align}

Due to the identifiability of $p_{\theta}(X,Y)$ we have $D(p_{\theta_0}(X,Y) || p_{\theta}(X,Y)) \geq 0$ with equality iff $\theta=\theta_0$. Since also $D(p_{\theta_0}(X)||p_{\theta}(X)))\geq 0$ we have that $\mu(\theta)\leq 0$ with equality iff $\theta=\theta_0$ (assuming $\lambda>0$). Furthermore, since the function $\mu(\theta)$ is continuous it attains its negative supremum on the compact $S$: $\sup_{\theta\in S} \mu(\theta)<0$. 

Combining this fact with \eqref{eq:ulln} we have that there exists $N$ such that for all $n>N$ the likelihood maximizers on $S$ achieves strictly negative values of $\ell_n'(\theta)$ with probability 1. However, since $\ell_n'(\theta)$ can be made to achieve values arbitrarily close to zero under $\theta=\theta_0$, we have that $\hat\theta_n\not\in S$ for $n>N$. Since $c_1,c_2$ were chosen arbitrarily $\hat\theta_n\to \theta_0$ with probability 1.
\end{proof}

The above proposition is not surprising. As $n\to\infty$ the number of labeled examples increase to $\infty$ and thus it remains to ensure that adding an increasing number of unlabeled examples does not hurt the estimator. More interesting is the quantitative description of the accuracy of $\hat\theta_n$ and its dependency on $\theta_0, \lambda, n$ which we turn to next.

\section{A2: Accuracy (Classification)} \label{sec:A2}
The proposition below states that the distribution of the maximizer of \eqref{eq:ll1} is asymptotically normal and provides its variance which may be used to characterize the accuracy of $\hat\theta_n$ as a function of $n,\theta_0,\lambda$. As in Section~\ref{sec:A1} our proof proceeds by casting generative SSL as an extension of stochastic composite likelihood. 

In Proposition~\ref{prop:efficiency1} (below) and in Proposition~\ref{prop:efficiency2} we use $\Var_{\theta_0}(H)$ to denote the variance matrix of a random vector $H$ under $p_{\theta_0}$. The notations $\toop,\tood$ denote convergences in probability and in distribution \cite{Ferguson1996} and $\nabla f(\theta)$, $\nabla^2 f(\theta)$ are the $r\times 1$ gradient vector and $r\times r$ matrix of second order derivatives of $f(\theta)$.
 
\begin{prop} \label{prop:efficiency1}
Under the assumptions of Proposition \ref{prop:consistency1} as well as convexity of $\Theta$ we have the following convergence in distribution of the maximizer of \eqref{eq:ll1} 
\begin{align} \label{eq:efficiency1}
   \sqrt{n}(\hat\theta_n-\theta_0) \tood N\left(0,\Sigma^{-1}\right)
\end{align}
as $n\to\infty$, where
\begin{align*}
\Sigma &= \lambda\Var_{\theta_0}(V_1)+(1-\lambda)\Var_{\theta_0}(V_2)\\
V_1 & = \nabla_{\theta} \log p_{\theta_0}(X,Y), \quad 
V_2  = \nabla_{\theta} \log p_{\theta_0}(X).
\end{align*}
\end{prop}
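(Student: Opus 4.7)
The plan is to apply the standard Taylor-expansion machinery for asymptotic normality of M-estimators, leveraging the consistency already established in Proposition~\ref{prop:consistency1}. Define the score $s_n(\theta) = \nabla \ell_n(\theta) = \sum_{i=1}^n W_i(\theta)$ where $W_i(\theta) = Z^{(i)} \nabla_\theta \log p_\theta(X^{(i)}, Y^{(i)}) + (1 - Z^{(i)}) \nabla_\theta \log p_\theta(X^{(i)})$. Convexity of $\Theta$ and consistency together place $\hat\theta_n$ in the interior with probability tending to one, so a mean-value expansion gives $0 = s_n(\hat\theta_n) = s_n(\theta_0) + H_n(\tilde\theta_n)(\hat\theta_n - \theta_0)$, where $H_n(\theta) = \nabla^2 \ell_n(\theta)$ and $\tilde\theta_n$ lies between $\hat\theta_n$ and $\theta_0$. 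It therefore suffices to establish $n^{-1/2} s_n(\theta_0) \tood N(0, \Sigma)$ together with $n^{-1} H_n(\tilde\theta_n) \toop -\Sigma$, at which point Slutsky's theorem delivers \eqref{eq:efficiency1}.

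For the score, I would first verify that $E_{\theta_0}[W_i(\theta_0)] = 0$: the independence of $Z^{(i)}$ from the data together with the standard identities $E_{\theta_0}[V_1] = E_{\theta_0}[V_2] = 0$ render each summand an i.i.d.\ mean-zero vector. When expanding the per-observation covariance, the cross term carries the factor $E[Z^{(i)}(1-Z^{(i)})]$, which vanishes because $Z^{(i)}$ is Bernoulli. Hence $\Var_{\theta_0}(W_i(\theta_0)) = \lambda \Var_{\theta_0}(V_1) + (1-\lambda) \Var_{\theta_0}(V_2) = \Sigma$, and the multivariate CLT yields $n^{-1/2} s_n(\theta_0) \tood N(0, \Sigma)$. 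This is precisely where the argument departs from~\cite{Dillon2009a}: there the indicator variables are assumed independent, whereas here $Z^{(i)}$ and $1 - Z^{(i)}$ are perfectly anti-correlated, and the cancellation $Z(1-Z) \equiv 0$ is what rescues the analysis.

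For the Hessian, the smoothness and positivity assumptions together with the compactness argument used in Proposition~\ref{prop:consistency1} ensure that the uniform strong law of large numbers applies to the second derivatives on a neighborhood of $\theta_0$, so $n^{-1} H_n(\theta) \to A(\theta)$ uniformly with $A(\theta) = E_{\theta_0}[\nabla^2_\theta (Z \log p_\theta(X,Y) + (1-Z) \log p_\theta(X))]$. Evaluated at $\theta_0$, the second Bartlett identity applied separately to $p_{\theta_0}(X,Y)$ and to the marginal $p_{\theta_0}(X) = \sum_y p_{\theta_0}(X,y)$ gives $E_{\theta_0}[\nabla^2 \log p_{\theta_0}(X,Y)] = -\Var_{\theta_0}(V_1)$ and $E_{\theta_0}[\nabla^2 \log p_{\theta_0}(X)] = -\Var_{\theta_0}(V_2)$, whence $A(\theta_0) = -\Sigma$. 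Combining $\tilde\theta_n \toop \theta_0$ with this uniform convergence yields $n^{-1} H_n(\tilde\theta_n) \toop -\Sigma$, completing the argument once $\Sigma$ is assumed invertible.

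The main obstacle I anticipate is justifying the information identity for the marginal $p_\theta(X)$: one must interchange $\nabla_\theta$ with the sum over $y$ and with the expectation over $X$, which requires extracting an integrable dominating function from the positivity and smoothness hypotheses on the compact $\Theta$. Everything else---the i.i.d.\ CLT for mean-zero vectors of finite covariance and the inversion of the Taylor expansion---is routine once the score CLT and the Hessian limit are in hand.
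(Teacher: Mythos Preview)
Your proposal is correct and follows essentially the same route as the paper: a mean-value expansion of the score, the CLT for $n^{-1/2}\nabla\ell_n(\theta_0)$ with the cross term killed by $Z(1-Z)\equiv 0$, convergence of the normalized Hessian to $-\Sigma$ via the law of large numbers and the Fisher-information identity, and then Slutsky's theorem. If anything, your treatment of the Hessian step (invoking a uniform SLLN rather than just continuous mapping) and your explicit flag about justifying differentiation under the sum for the marginal are slightly more careful than the paper's own argument.
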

\begin{proof}
By the mean value theorem and convexity of $\Theta$, there is
$\eta\in(0,1)$ for which $\theta'\!\!=\!\!\theta_0+\eta(\hat\theta_n-\theta_0)$ and 
\begin{align*}
\nabla\ell_n(\hat\theta_n)&=\nabla\ell_n(\theta_0)+ \nabla^2\ell_n(\theta')(\hat\theta_n-\theta_0).
\end{align*}
Since $\hat\theta_n$ maximizes $\ell_n$ we have $\nabla \ell_n(\hat\theta_n)=0$ and 
\begin{align}
\sqrt{n}(\hat\theta_n-\theta_0)&=- \sqrt{n} \left(\nabla^2\ell_n(\theta')\right)^{-1} \left(\nabla\ell_n(\theta_0)\right).
\label{eq:tsell}
\end{align}
By Proposition~\ref{prop:consistency1} we have $\hat\theta_n\toop\theta_0$ which implies that $\theta'\toop\theta_0$ as well.  Furthermore, by the law of large numbers and the fact that $W_n\toop W$ implies $g(W_n)\toop g(W)$ for continuous $g$,
\begin{align}
   (\nabla^2 \ell_n(\theta'))^{-1} &\toop (\nabla^2 \ell_n(\theta_0))^{-1}  \label{eq:res1}\\
   &\toop \Big( \lambda  \E_{\theta_0}  \nabla^2 \log p_{\theta_0}(X,Y) 
+  (1-\lambda)  \E_{\theta_0}  \nabla^2 \log p_{\theta_0}(X) \Big)^{-1} = \Sigma^{-1} \nonumber
\end{align}
where in the last equality we used a well known identity concerning the Fisher information. 

For the remaining term in the rhs of \eqref{eq:tsell} we have 
\begin{align} \label{eq:WQ}
-\sqrt{n} \nabla \ell_n(\theta_0) = -\sqrt{n}\frac{1}{n} \sum_{i=1}^n (W^{(i)}+ Q^{(i)})
\end{align}
where $W^{(i)}=Z^{(i)} \nabla \log p_{\theta_0}(X^{(i)},Y^{(i)})$, $Q^{(i)}=(1-Z^{(i)})\nabla \log p_{\theta_0}(X^{(i)})$. Since \eqref{eq:WQ} is an average of iid random vectors $W^{(i)}+Q^{(i)}$ it is asymptotically normal by the central limit theorem with mean
\begin{align*}
\E_{\theta_0} (Q + W) &= \lambda \E_{\theta_0} \nabla \log p_{\theta_0}(X,Y) + (1-\lambda) \E\nabla \log p_{\theta_0}(X)
= \lambda 0 + (1-\lambda) 0.
\end{align*}
and variance 
\begin{align*}
\Var_{\theta_0}(W+Q) &= \E_{\theta_0} W^2+\E_{\theta_0} Q^2 +2\E_{\theta_0} WQ\\
&= \lambda \Var_{\theta_0} V_1 + (1-\lambda) \Var_{\theta_0} V_2
\end{align*}
where we used \correct{the fact that}{}  $\E(Z(1-Z))=\E Z-\E Z^2=0$ \correct{for binary random variables $Z$}{}.

We have thus established that 
\begin{align} \label{eq:res2}
-\sqrt{n} \nabla \ell_n(\theta_0) \tood N (0,\Sigma).
\end{align}
We finish the proof by combining \eqref{eq:tsell}, \eqref{eq:res1} and \eqref{eq:res2} using Slutsky's theorem. 
\end{proof}

Proposition~\ref{prop:efficiency1} characterizes the asymptotic estimation accuracy using the matrix $\Sigma$. Two convenient one dimensional summaries of the accuracy are the trace and the determinant of $\Sigma$. In some simple cases (such as binary event naive Bayes) $\trace(\Sigma)$ can be brought to a mathematically simple form which exposes its dependency on $\theta_0,n,\lambda$. In other cases the dependency may be obtained using numerical computing.

Figure~\ref{fig:lorem1} displays three error measures for the multinomial naive Bayes SSL classifier \cite{Nigam2000} and the  Reuters RCV1 text classification data. In all three figures the error measures are represented as functions of $n$ (horizontal axis) and $\lambda$ (vertical axis). The error measures are classification error rate (left), trace of the empirical mse (middle), and log-trace of the asymptotic variance (right). The measures were obtained over held-out sets and averaged using cross validation.  Figure~\ref{fig:lorem3} (middle) displays the asymptotic variance as a function of $n$ and $\lambda$ for a randomly drawn $\theta_0$. 

As expected the measures decrease with $n$ and $\lambda$ in all the figures. It
is interesting to note, however, that the shapes of the contour plots are very
similar across the three different measures (top row). This confirms that the
asymptotic variance (right) is a valid proxy for the finite sample measures of
error rates and empirical mse. We thus conclude that the asymptotic variance is an attractive measure that is similar to finite sample error rate and at the same time has a convenient mathematical expression.

\section{A3: Consistency (Structured)} \label{sec:A3}
In the case of structured prediction the log-likelihood \eqref{eq:ll2} is specified using a stochastic labeling policy. In this section we consider the conditions on that policy that ensures estimation consistency, or in other word convergence of the maximizer of \eqref{eq:ll2} to $\theta_0$ as $n\to\infty$.

We assume that the labeling policy $\wp$ is a probabilistic mixture of deterministic sequence labeling functions $\chi_1,\ldots,\chi_k$. In other words, $\wp(Y)$ takes values $\chi_i(Y), i=1,\ldots,k$ with probabilities $\lambda_1,\ldots,\lambda_k$. For example the policy \eqref{eq:policyExample} corresponds to $\chi_1(Y)=Y$, $\chi_2(Y)=\emptyset$, $\chi_3(Y)=\{Y_1,\ldots,Y_{\lfloor m/2\rfloor}\}$ (where $Y=\{Y_1,\ldots,Y_m\}$) and $\lambda=(1/3,1/3,1/3)$. 

Using the above notation we can write \eqref{eq:ll2} as
\begin{align} \label{eq:ll2m}
\ell_n(\theta) &= \sum_{i=1}^n\sum_{j=1}^k Z_j^{(i)} \log p_{\theta}(\chi_j(Y^{(i)}),X^{(i)}) \\
Z^{(i)} &\sim \text{Mult}(1,(\lambda_1,\ldots,\lambda_k)) \nonumber
\end{align}
which exposes its similarity to the stochastic composite likelihood function in \cite{Dillon2009a}. Note however that \eqref{eq:ll2m} is not formally a stochastic composite likelihood since $Z^{(i)}_j, j=1,\ldots,k$ are not independent and since $\chi_j(Y)$ depends on the length of the sequence $Y$ (see for example $\chi_1$ and $\chi_3$ above). We also use the notation $S_j^m$ for the subset of labels provided by $\chi_j$ on length-$m$ sequences
\[ \chi_j(Y_1,\ldots,Y_m)=\{Y_i:i\in S_j^m\}.\]

\begin{defn} \label{def:identifiable}
A labeling policy is said to be identifiable if the following map is injective
\begin{align*}
\bigcup_{m:q(m)>0}\,\,\bigcup_{j=1}^k \{p_{\theta}(\{Y_r: r\in S_j^m\},X)\} \to p_{\theta}(X,Y)
\end{align*}
where $q$ is the distribution of sequences lengths. In other words, there is at most one collection of probabilities corresponding to the lhs above that does not contradict the joint distribution.
\end{defn}
The importance of Definition~\ref{def:identifiable} is that it ensures the recovery of $\theta_0$ from the sequences partially labeled using the labeling policy. For example, a labeling policy characterized by $\chi_1(Y)=Y_1$, $\lambda_1=1$ (always label only the first sequence element) is non-identifiable for most interesting $p_{\theta}$ as the first sequence component is unlikely to provide sufficient information to characterize the parameters associated with transitions $Y_t\to Y_{t+1}$. 

\begin{prop} \label{prop:consistency2}
Assuming the same conditions as Proposition~\ref{prop:consistency1}, and $\lambda_1,\ldots,\lambda_k>0$ with identifiable $\chi_1,\ldots,\chi_k$, the maximizer of \eqref{eq:ll2m} is consistent i.e., $\hat\theta_n\to\theta_0$ as $n\to\infty$ with probability 1.
\end{prop}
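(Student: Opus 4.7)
The plan is to mimic the proof of Proposition~\ref{prop:consistency1}, casting the structured log-likelihood \eqref{eq:ll2m} as a stochastic composite likelihood and using a uniform strong law of large numbers argument to locate the maximizer. First I would define a normalized likelihood
\begin{align*}
\ell_n'(\theta) = \frac{1}{n}\sum_{i=1}^n\sum_{j=1}^k \Bigl(Z_j^{(i)}\log p_{\theta}(\chi_j(Y^{(i)}),X^{(i)}) - \lambda_j \log p_{\theta_0}(\chi_j(Y^{(i)}),X^{(i)})\Bigr),
\end{align*}
so that $\hat\theta_n$ still maximizes $\ell_n'$. Conditioning first on the sequence length $m \sim q$ and then on the realized component $j$ of $Z^{(i)}$, the strong law of large numbers gives pointwise convergence of $\ell_n'(\theta)$ to
\begin{align*}
\mu(\theta) = -\sum_{j=1}^k \lambda_j\, \E_{m\sim q}\, D\bigl(p_{\theta_0}(\{Y_r: r\in S_j^m\},X)\,\big\|\,p_{\theta}(\{Y_r: r\in S_j^m\},X)\bigr).
\end{align*}

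Next I would establish that $\mu(\theta)\le 0$ with equality if and only if $\theta=\theta_0$. Non-positivity is immediate from non-negativity of the KL divergences together with $\lambda_j>0$. For the equality case, $\mu(\theta)=0$ forces, for every $m$ with $q(m)>0$ and every $j$, the marginals $p_\theta(\{Y_r:r\in S_j^m\},X)$ to coincide with $p_{\theta_0}(\{Y_r:r\in S_j^m\},X)$; the identifiability assumption from Definition~\ref{def:identifiable} then pins down $p_\theta(X,Y)=p_{\theta_0}(X,Y)$, and ordinary model identifiability implies $\theta=\theta_0$. This is the step where the structured-prediction hypotheses genuinely enter and is the most delicate part of the argument, since identifiability of the labeling policy (rather than of the model alone) is what rules out the degenerate cases like the ``$\chi_1(Y)=Y_1$'' example following Definition~\ref{def:identifiable}.

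With the limit function $\mu$ in hand, I would restrict attention to an annular compact set $S=\{\theta:c_1\le\|\theta-\theta_0\|\le c_2\}\subset\Theta$. On $S$ the integrands are uniformly dominated by $\sum_j|\log p_\theta(\chi_j(Y),X)|\le K(X,Y)$ with finite expectation, so the uniform strong law of large numbers (chapter~16 of \cite{Ferguson1996}) yields
\begin{align*}
P\left\{\lim_{n\to\infty}\sup_{\theta\in S}|\ell_n'(\theta)-\mu(\theta)|=0\right\}=1.
\end{align*}
Continuity of $\mu$ on the compact $S$, combined with $\mu<0$ on $S$, gives $\sup_{\theta\in S}\mu(\theta)<0$. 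Therefore, with probability one, for all sufficiently large $n$ the value of $\ell_n'$ on $S$ stays bounded away from zero, whereas $\ell_n'(\theta_0)\to 0$. Hence $\hat\theta_n\notin S$ eventually, and since $c_1,c_2$ were arbitrary, $\hat\theta_n\to\theta_0$ almost surely.

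The main obstacle I expect is the second step: carefully handling the fact that the components $Z_j^{(i)}$ of the multinomial indicator are dependent and that $\chi_j$ acts differently on sequences of different lengths. The former is handled by computing the per-sample expectation directly against $\mathrm{Mult}(1,\lambda)$ rather than treating the indicators as independent (which is precisely where the proof diverges from the composite-likelihood argument of \cite{Dillon2009a}), and the latter by averaging the KL contribution over the length distribution $q$ as above before invoking identifiability.
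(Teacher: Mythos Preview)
Your proposal is correct and follows essentially the same route as the paper: the same centered likelihood $\ell_n'$, the same limit $\mu(\theta)$ expressed as a $\lambda_j$- and $q(m)$-weighted negative sum of KL divergences, the same use of policy identifiability to force $\mu(\theta)<0$ for $\theta\neq\theta_0$, and then the uniform SLLN/compactness argument of Proposition~\ref{prop:consistency1}. If anything, you spell out the second half more fully than the paper does (it simply defers to the earlier proof), and you are more explicit about how both policy identifiability and model identifiability combine to pin down $\theta_0$.
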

\begin{proof}
The log-likelihood \eqref{eq:ll2}, modified slightly by a linear combination with a constant is 
\begin{align*}
\ell_n'(\theta) = \frac{1}{n}\sum_{i=1}^n \sum_{j=1}^k \Big(Z^{(i)}_j \log p_{\theta}(  \chi_j(Y^{(i)}),X^{(i)})   -\lambda_j \log p_{\theta_0}(\chi_j(Y^{(i)}),X^{(i)})\Big).
\end{align*}
By the strong law of large numbers $\ell_n'(\theta)$ converges to its expectation
\begin{align*}
\mu(\theta) &=  -\sum_{j=1}^k\lambda_j \sum_{m>0} q(m) \cdot D(p_{\theta_0}(\{Y_i:i\in S_j^m\},X)||p_{\theta}(\{Y_i:i\in S_j^m\},X)).
\end{align*}

Since $\mu$ is a linear combination of KL divergences with positive weights it is non-negative and is 0 if $\theta=\theta_0$. The identifiability of the labeling policy ensures that $\mu(\theta)>0$ if $\theta\neq\theta_0$. We have thus established that $\ell_n(\theta)$ converges to a non-negative continuous function $\mu(\theta)$ whose maximum is achieved at $\theta_0$. The rest of the proof proceeds along similar lines as Proposition~\ref{prop:consistency2}. 
\end{proof}

Ultimately, the precise conditions for consistency will depend on the parametric family $p_{\theta}$ under consideration. For many structured prediction models such as Markov random fields the consistency conditions are mild. Depending on the precise feature functions, consistency is generally satisfied for every policy that labels contiguous subsequences with positive probability. However, some care need to be applied for models like HMM containing parameters associated with the start label or end label and with models asserting higher order Markov assumptions.

\section{A4: Accuracy (Structured)}\label{sec:A4}
We consider in this section the dependency of the estimation accuracy in structured prediction SSL \eqref{eq:ll2} on $n,\theta_0$ but perhaps most interestingly on the labeling policy $\wp$. Doing so provides insight into not only how much data to label but also in what way.  

\begin{prop} \label{prop:efficiency2}
Under the assumptions of Proposition \ref{prop:consistency2} as well as convexity of $\Theta$ we have the following convergence in distribution of the maximizer of \eqref{eq:ll2m} 
\begin{align} \label{eq:efficiency2}
   \sqrt{n}(\hat\theta_n-\theta_0) \tood N\left(0,\Sigma^{-1}\right)
\end{align}
as $n\to\infty$, where
\begin{align*}
\Sigma^{-1} &= \E_{q(m)} \left\{\sum_{j=1}^k \lambda_j \Var_{\theta_0} (\nabla V_{jm})\right\}\\
V_{jm} &=  \log p_{\theta_0} (\{Y_i:i\in S_j^m\},X).
\end{align*} 
\end{prop}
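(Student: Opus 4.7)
The plan is to mimic the three-step argument of Proposition~\ref{prop:efficiency1}: a mean-value Taylor expansion around $\theta_0$, a uniform law of large numbers for the Hessian, and a central limit theorem for the score, glued together by Slutsky's theorem. The two new ingredients relative to the classification case are (i) the multinomial (rather than Bernoulli) structure of $Z^{(i)}$, which must be exploited when computing the asymptotic covariance of the score, and (ii) the random sequence length $m\sim q$, which introduces an outer expectation $\E_{q(m)}$ throughout.

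First I would invoke Proposition~\ref{prop:consistency2} to conclude $\hat\theta_n\toop\theta_0$. Convexity of $\Theta$ and the mean value theorem applied to $\nabla\ell_n(\hat\theta_n)=0$ give
\begin{align*}
\sqrt{n}(\hat\theta_n-\theta_0) = -\bigl(\nabla^2\ell_n(\theta')\bigr)^{-1}\,\sqrt{n}\,\nabla\ell_n(\theta_0),
\end{align*}
with $\theta'$ between $\hat\theta_n$ and $\theta_0$, hence $\theta'\toop\theta_0$. For the Hessian factor, the uniform strong LLN used in the proof of Proposition~\ref{prop:consistency1} (applied term-by-term over $j=1,\ldots,k$ and over sequence lengths via dominated convergence in $m$) together with the continuity of matrix inversion yields
\begin{align*}
\bigl(\nabla^2\ell_n(\theta')\bigr)^{-1} \toop \Bigl(\sum_{j=1}^k \lambda_j\,\E_{q(m)}\,\E_{\theta_0}\,\nabla^2 V_{jm}\Bigr)^{-1}.
\end{align*}
The Fisher information identity, applied separately to each marginal density $p_{\theta_0}(\{Y_r:r\in S_j^m\},X)$, converts $-\E_{\theta_0}\nabla^2 V_{jm}$ into $\Var_{\theta_0}(\nabla V_{jm})$, matching the form of $\Sigma$ in the statement.

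Second, I would handle the score via the CLT. Writing
\begin{align*}
\sqrt{n}\,\nabla\ell_n(\theta_0) = \frac{1}{\sqrt{n}}\sum_{i=1}^n \sum_{j=1}^k Z_j^{(i)}\,\nabla V_{jm_i}^{(i)},
\end{align*}
the summands are iid $r$-vectors. Each $\nabla V_{jm}$ is a score for its marginal distribution and therefore has mean zero under $p_{\theta_0}$, so the mean vanishes. For the covariance, the key observation is that $Z^{(i)}\sim\text{Mult}(1,(\lambda_1,\ldots,\lambda_k))$ implies $Z_j^2=Z_j$ and, crucially, $Z_j Z_l=0$ almost surely for $j\neq l$ because exactly one coordinate of $Z^{(i)}$ equals one. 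Consequently the cross terms drop out and, after conditioning on $m$ and then averaging with $q$,
\begin{align*}
\Var_{\theta_0}\Bigl(\sum_{j=1}^k Z_j\nabla V_{jm}\Bigr) = \E_{q(m)}\Bigl\{\sum_{j=1}^k \lambda_j\,\Var_{\theta_0}(\nabla V_{jm})\Bigr\} = \Sigma,
\end{align*}
so $\sqrt{n}\,\nabla\ell_n(\theta_0)\tood N(0,\Sigma)$. Combining these two convergences through Slutsky's theorem gives the stated limit $N(0,\Sigma^{-1})$.

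The main obstacle is bookkeeping around the multinomial $Z^{(i)}$ and the random length $m$; the independence assumption used in stochastic composite likelihood is violated here, so one cannot simply quote \cite{Dillon2009a}. However, the $Z_jZ_l=0$ identity for one-trial multinomials turns this apparent complication into a simplification, making the off-diagonal contributions vanish exactly. A secondary technical point is ensuring the uniform LLN and the exchange of $\E_{q(m)}$ with differentiation remain valid when sequence lengths are unbounded; this is handled by the same compactness and smoothness hypotheses as in Proposition~\ref{prop:consistency1}, provided the dominating function $K(X,Y)$ is integrable against $q(m)$.
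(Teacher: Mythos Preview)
Your proposal is correct and follows essentially the same route as the paper's own proof: mean-value expansion, consistency from Proposition~\ref{prop:consistency2} to control $\theta'$, LLN plus the Fisher information identity for the Hessian, CLT for the score, and Slutsky to combine. Your key observation that $Z_j Z_l=0$ a.s.\ for $j\neq l$ in a one-trial multinomial is exactly what the paper uses (phrased there as ``only one of $\chi_1,\ldots,\chi_k$ is chosen''), and your handling of the random length via an outer expectation $\E_{q(m)}$ mirrors the paper's use of the indicator $1_{\{\mathrm{length}(Y^{(i)})=m\}}$.
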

\begin{proof}
By the mean value theorem and convexity of $\Theta$ there is $\eta\in(0,1)$ for which $\theta'\!=\!\theta_0 \!+\! \eta (\hat\theta_n-\theta_0)$ and 
\[ \nabla \ell_n(\hat\theta_n) = \nabla \ell_n(\theta_0) + \nabla^2 \ell_n(\theta') (\hat\theta_n-\theta_0).\]
Since $\hat\theta_n$ maximizes $\ell$, $\nabla \ell_n(\hat\theta_n)=0$ and
\begin{align} \label{eq:step1}
   \sqrt{n}(\hat\theta_n-\theta_0) = -\sqrt{n} (\nabla^2 \ell_n(\theta'))^{-1} \nabla \ell_n(\theta_0).
\end{align}
By Proposition~\ref{prop:consistency2} we have
$\hat\theta_n\toop\theta_0$ which implies that
$\theta'\toop\theta_0$ as well. Furthermore, by the law of large numbers and
the fact that if $W_n\toop W$ then $g(W_n)\toop g(W)$ for continuous $g$,
\begin{align}\label{eq:res1}
   (\nabla^2 \ell_n(\theta'))^{-1} &\toop (\nabla^2 \ell_n(\theta_0))^{-1}  \\
   &\toop \left(\sum_{m>0} q(m) \sum_{j=1}^k \lambda_j  \E_{\theta_0}  (\nabla^2 V_{jm})\right)^{-1} \nonumber\\
   &= -\left(\sum_{m>0}q(m)\sum_{j=1}^k \lambda_j \Var_{\theta_0}(\nabla V_{jm})\right)^{-1}. \nonumber
\end{align}
where in the last equality we used a well known identity concerning the Fisher information. 

For the remaining term on the rhs of \eqref{eq:step1} we have
\begin{align} \label{eq:remainingTerm}
\sqrt{n}\, \nabla \ell_n(\theta_0) &= \sqrt{n} \frac{1}{n} \sum_{i=1}^n  W_i
\end{align}
where the random vectors 
\begin{align*}
W_i =  \sum_{m>0} 1_{\{\text{length}(Y^{(i)})=m\}} \sum_{j=1}^k   Z_j^{(i)} \nabla V_{jm}^{(i)}
\end{align*}
have expectation 0 due to the fact that the expectation of the score is 0. The variance of $W_i$ is 
\begin{align*}
\Var_{\theta_0} W_i &=\! \E_{\theta_0}\! \sum_{m>0}  1_{\{\text{length}(Y^{(i)})=m\}} \sum_{j=1}^k   Z_j^{(i)}  \nabla V_{jm}^{(i)} \nabla V_{jm}^{(i)\top}\\
&= \sum_{m>0}  q(m)  \sum_{j=1}^k \lambda_j \E \left(\nabla V_{jm}^{(i)}
\nabla V_{jm}^{(i)\top}\right)
\end{align*} 
where in the first equality we used the fact that $Y^{(i)}$ can have only one length and only one of $\chi_1,\ldots,\chi_k$ is chosen. Using the central limit theorem we thus conclude that 
\begin{align*}
   \sqrt{n}\,\nabla \ell_n(\theta_0) \tood
   N\left(0,\Sigma^{-1}\right)
\end{align*}
and finish the proof by combining \eqref{eq:step1}, \eqref{eq:res1}, and \eqref{eq:res2} using Slutsky's theorem.
\end{proof}

Figure~\ref{fig:lorem2} (left, middle) displays the test-set per-sequence
perplexity for the CoNLL2000 chunking task as a function of the total number of
labeled tokens. We used the Boltzmann chain MRF model that is the MRF
corresponding to HMM (though not identical e.g.,  \cite{MacKay1996}). We
consider labeling policies $\wp$ that label the entire sequence with
probability $\lambda$ and otherwise label contiguous sequences of length 5
(left) or leave the sequence fully unlabeled (middle). Lighter nodes indicate
larger $n$ and unsurprisingly show a decrease in the test-set perplexity
\correct{}{as $n$ is increased}.
\correct{Interestingly, the middle figure shows that labeling policies using a
smaller amount of labels may outperform others indicating that a naive choices
of the labeling scheme may be inefficient.}{Interestingly, the middle figure shows that labeling policies using a smaller
amount of labels may outperform other policies.  This further motivates our analysis and indicates that naive choices of $\wp$ may be inefficient, viz.\ inflating labeling cost with negligible accuracy improvement to accuracy (cf.\ also
Sec.~\ref{sec:A5} for how to avoid this pitfall).}

\subsection{Conditional Structured Prediction}
Thus far our discussion on structured prediction has been restricted to generative models such as HMM or Boltzmann chain MRF. Similar techniques, however, can be used to analyze SSL for conditional models such as CRFs that are estimated by maximizing the conditional likelihood. The key to extending the results in this paper to CRFs is to express conditional SSL estimation in a form similar to \eqref{eq:ll2}
\begin{align*}
\hat\theta_n &= \argmax \sum_{i=1}^n \log p_{\theta}(\wp(Y^{(i)})|X^{(i)})
\end{align*}
and to proceed with an asymptotic analysis that extends the classical conditional MLE asymptotics. We omit further discussion due to lack of space but include some experimental results for CRFs. 

Figure~\ref{fig:lorem3} (left) depicts a similar experiment to the one described in the previous section for conditional estimation in CRF models. The figure displays per-sequence perplexity as a function $n$ ($x$ axis) and $\lambda_1$ ($y$ axis). We observe a trend nearly identical to that of the Boltzmann chain MRF (Figure~\ref{fig:lorem2}, left, middle).

\begin{figure*}
\centering
\begin{tabular}{ccc}
\hspace{-.25in}
	\includegraphics[width=.35\textwidth]{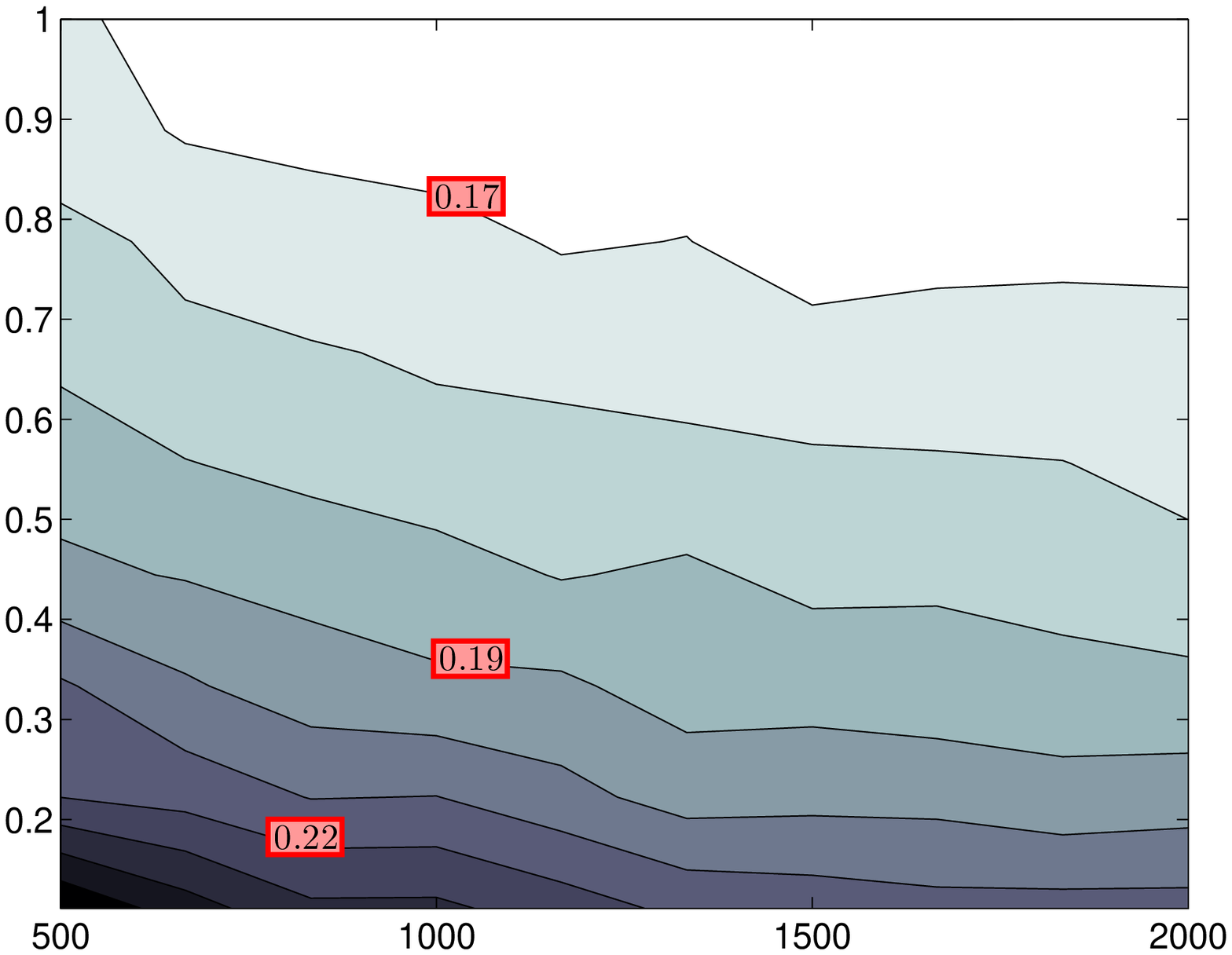} & \hspace{-.20in}
	\includegraphics[width=.344\textwidth]{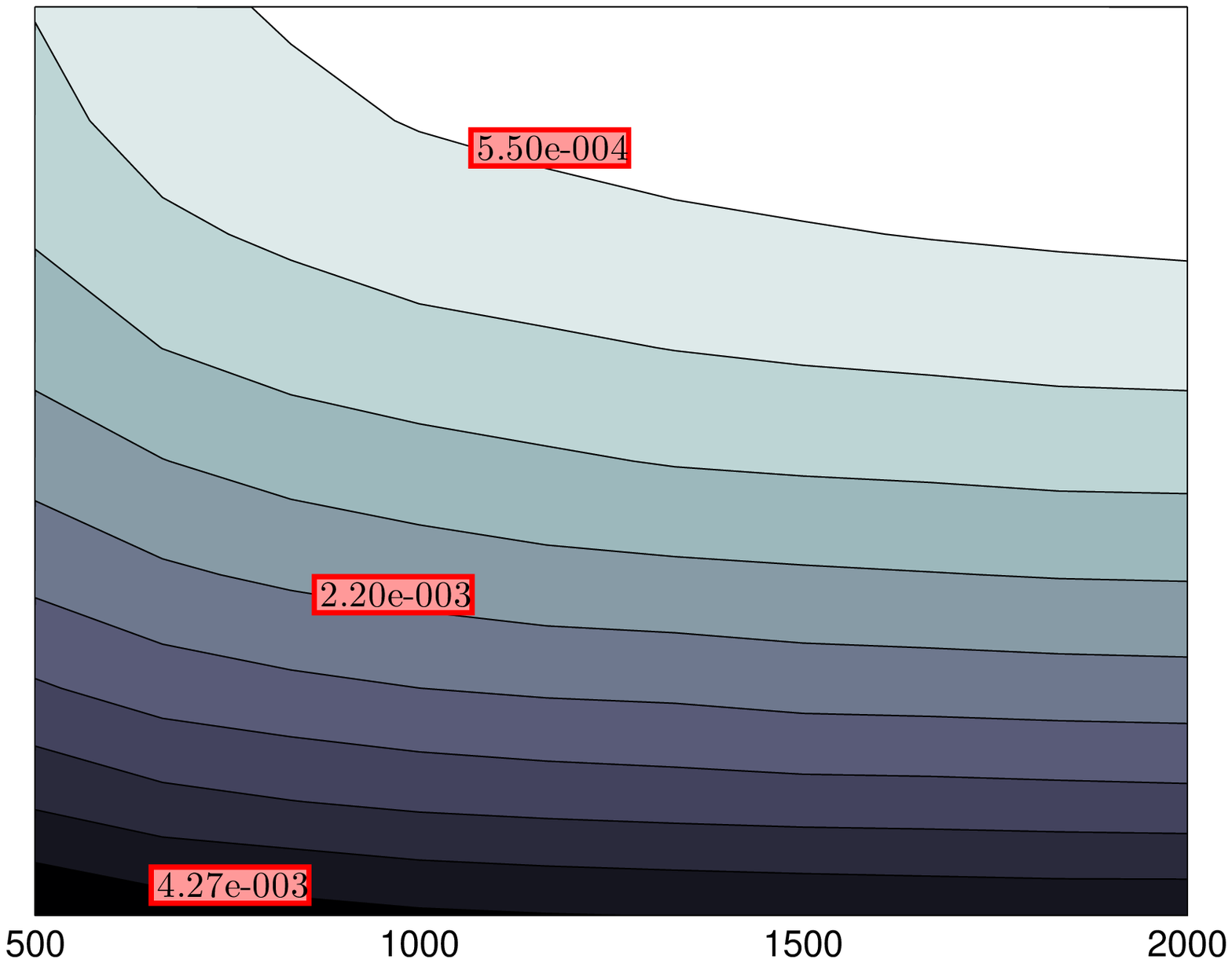}     & \hspace{-.2in}
	\includegraphics[width=.324\textwidth]{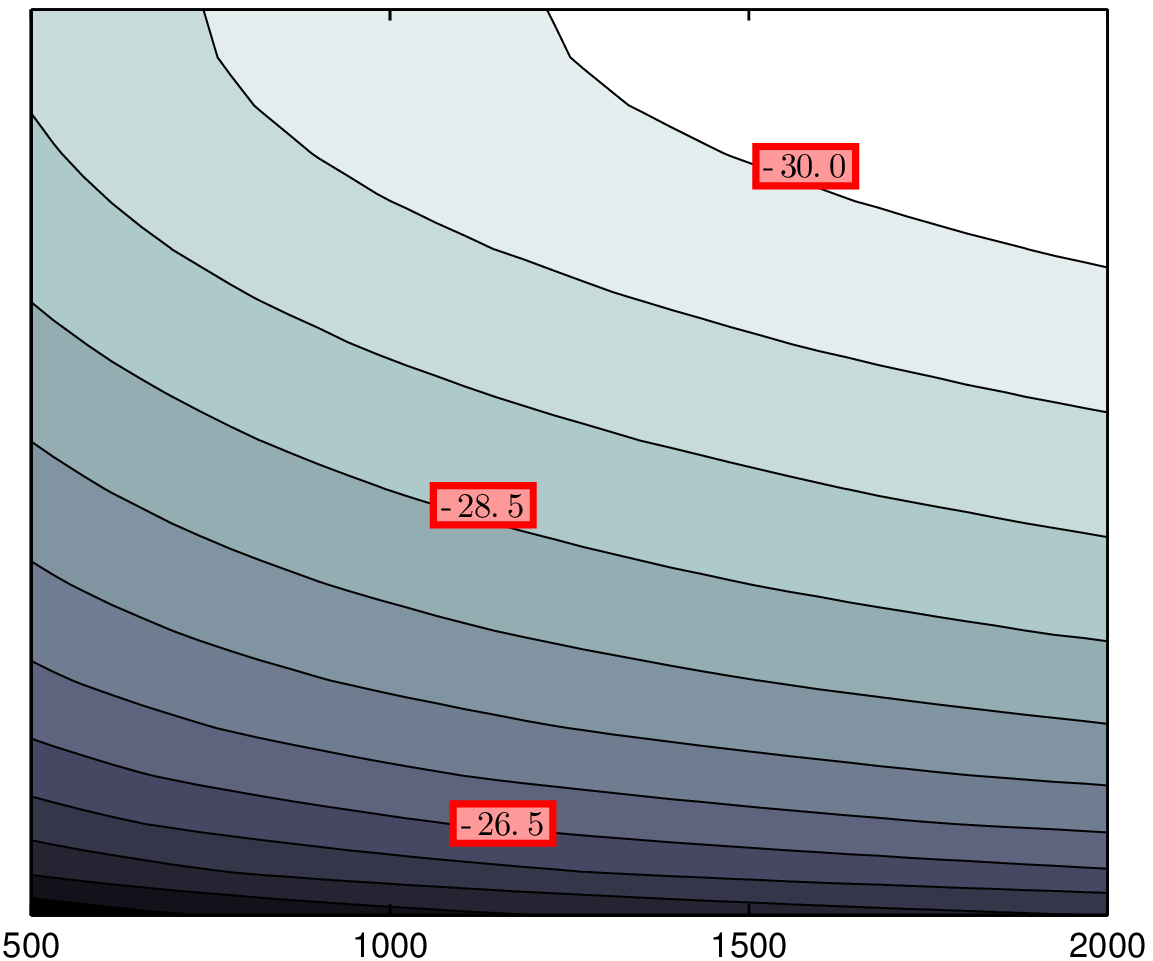} 
\end{tabular}
\caption{	Three error measures for the multinomial naive Bayes SSL classifier applied
	to Reuters RCV1 text data.  In each, error is a function of $n$ (horizontal
	axis) and $\lambda$ (vertical axis).  The left depicts classification error rate, the middle depicts the trace of empirical mse, and right depicts the log-trace of the asymptotic variance.  Results were obtained using held-out sets and averaged using cross validation.  Particularly noteworthy is a striking	correlation among all three figures, justifying the use of asymptotic variance as a surrogate for 	classification error, even for relatively small values of $n$. }
\label{fig:lorem1}
\end{figure*}
\begin{figure*}
\centering
\begin{tabular}[b]{ccc}
\hspace{-.25in}
	\includegraphics[width=.35\textwidth]{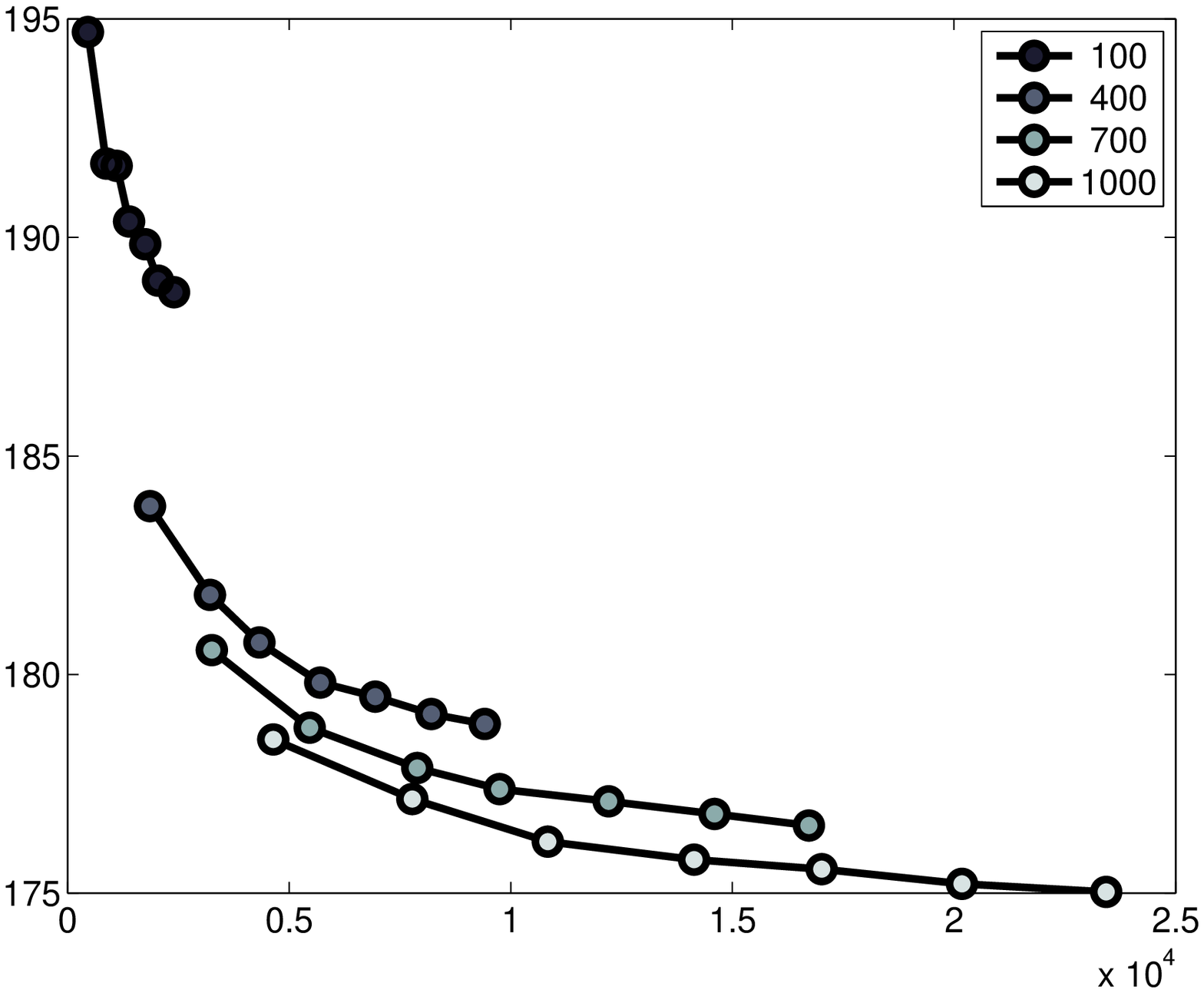}    & \hspace{-.20in}
	\includegraphics[width=.335\textwidth]{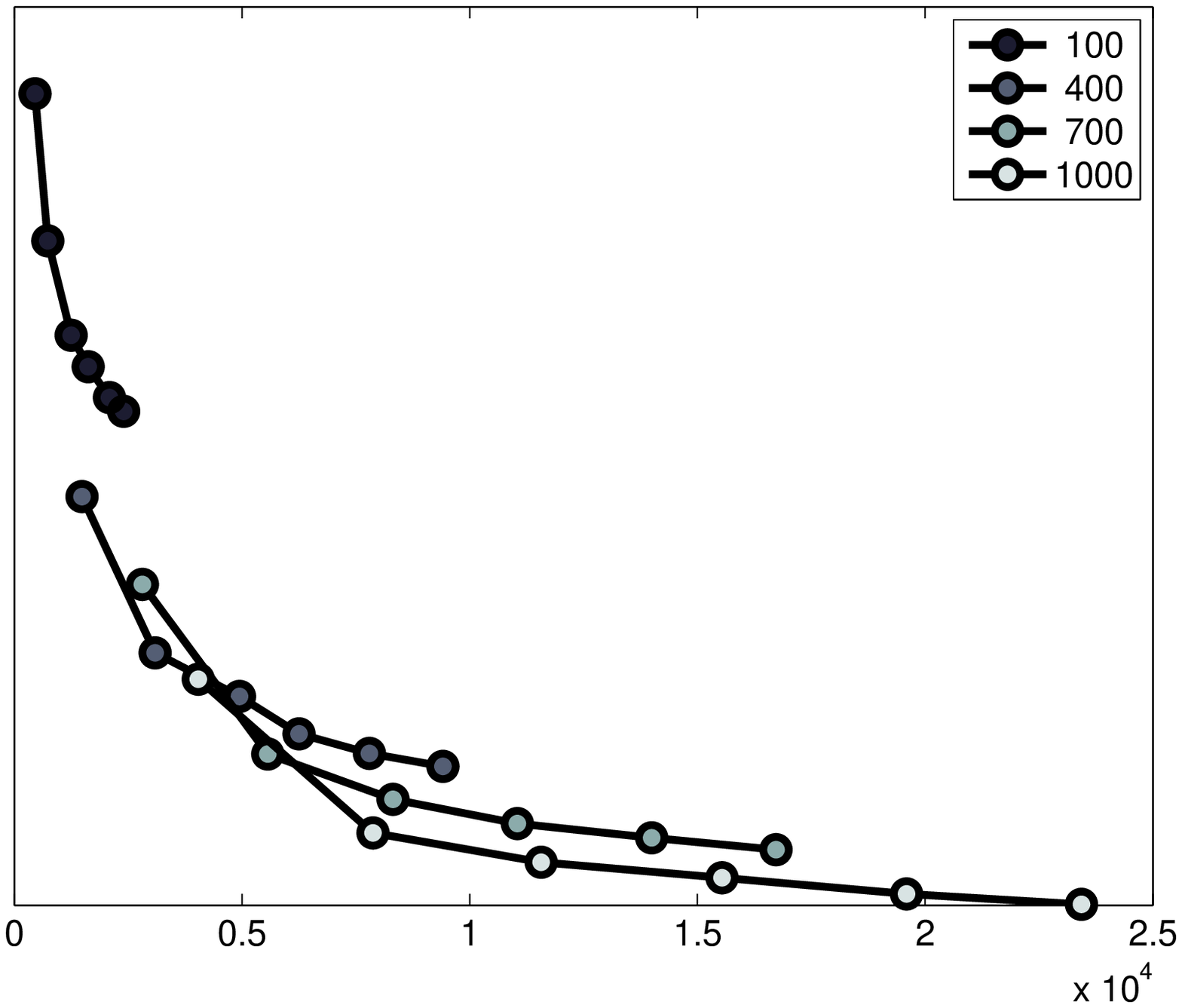}   & \hspace{-.2in}
	\raisebox{1.9ex}{\includegraphics[width=.345\textwidth]{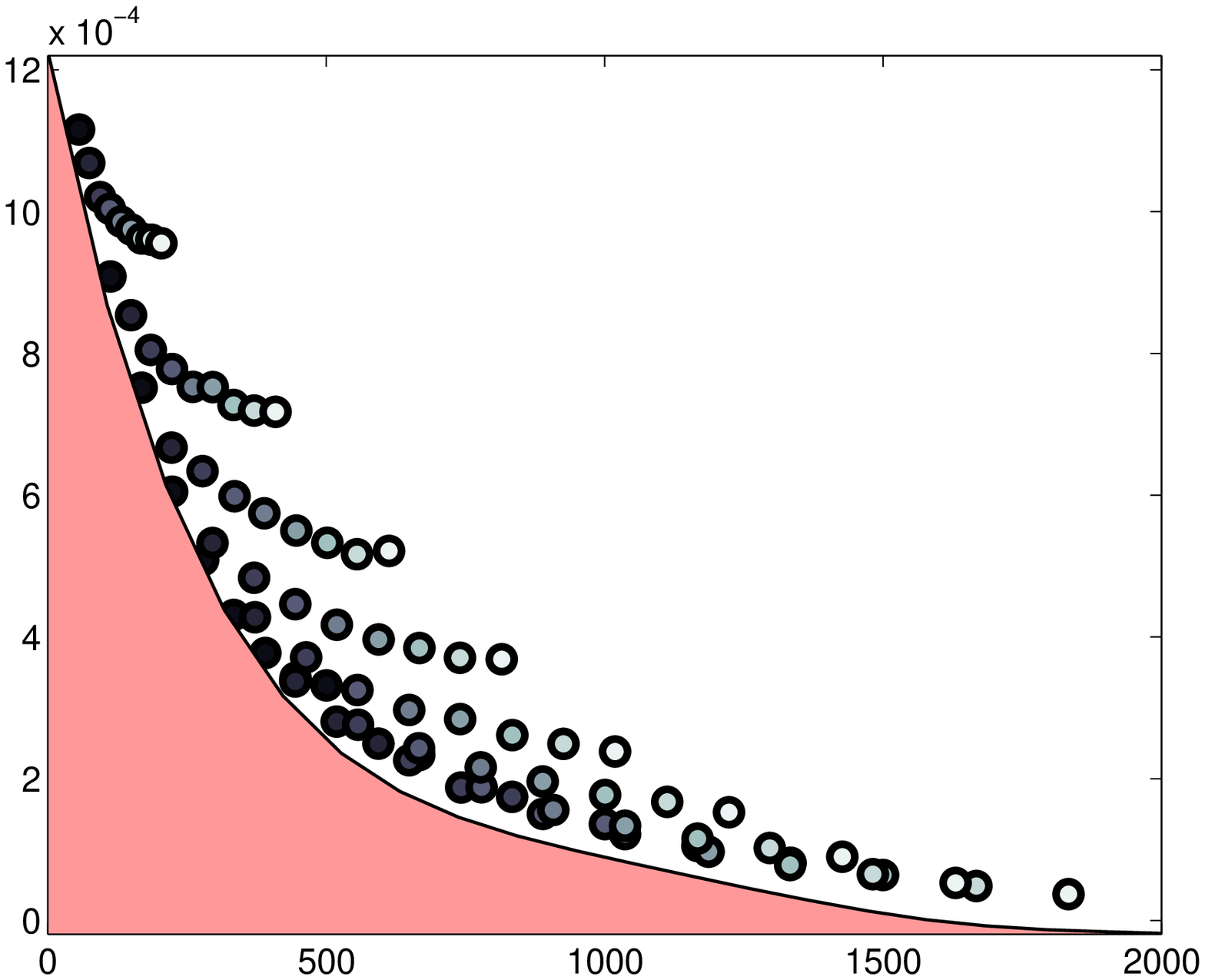}}
\end{tabular}
\caption{Test-set results for two policies of unlabeled data for Boltzmann chain MRFs
applied to the  CoNLL 2000 text-chunking dataset (left, middle).  The \correct{}{shaded portion of the} right panel
depicts the empirically unachievable region for naive Bayes SSL classifier on
the 20-newsgroups dataset.  The left two share a common log-perplexity scale
(vertical axis) while the vertical axis of the right panel corresponds to trace of
the empirical MSE; the horizontal axis indicates labeling cost. As above,
results were obtained using held-out sets and averaged using cross validation.
Collectively these figures represent the application and effect of various
labeling policies.  The left figure depicts the consequence of partially
missing samples for various $n$,$\lambda$ while the middle and right  represent
SSL in the more traditional all or nothing sense: either labeled or unlabeled samples. See text for more details.}
\label{fig:lorem2}
\end{figure*}
\begin{figure*}
\centering
\begin{tabular}{ccc}
\hspace{-.25in}
	\raisebox{0.2ex}{\includegraphics[width=.349\textwidth]{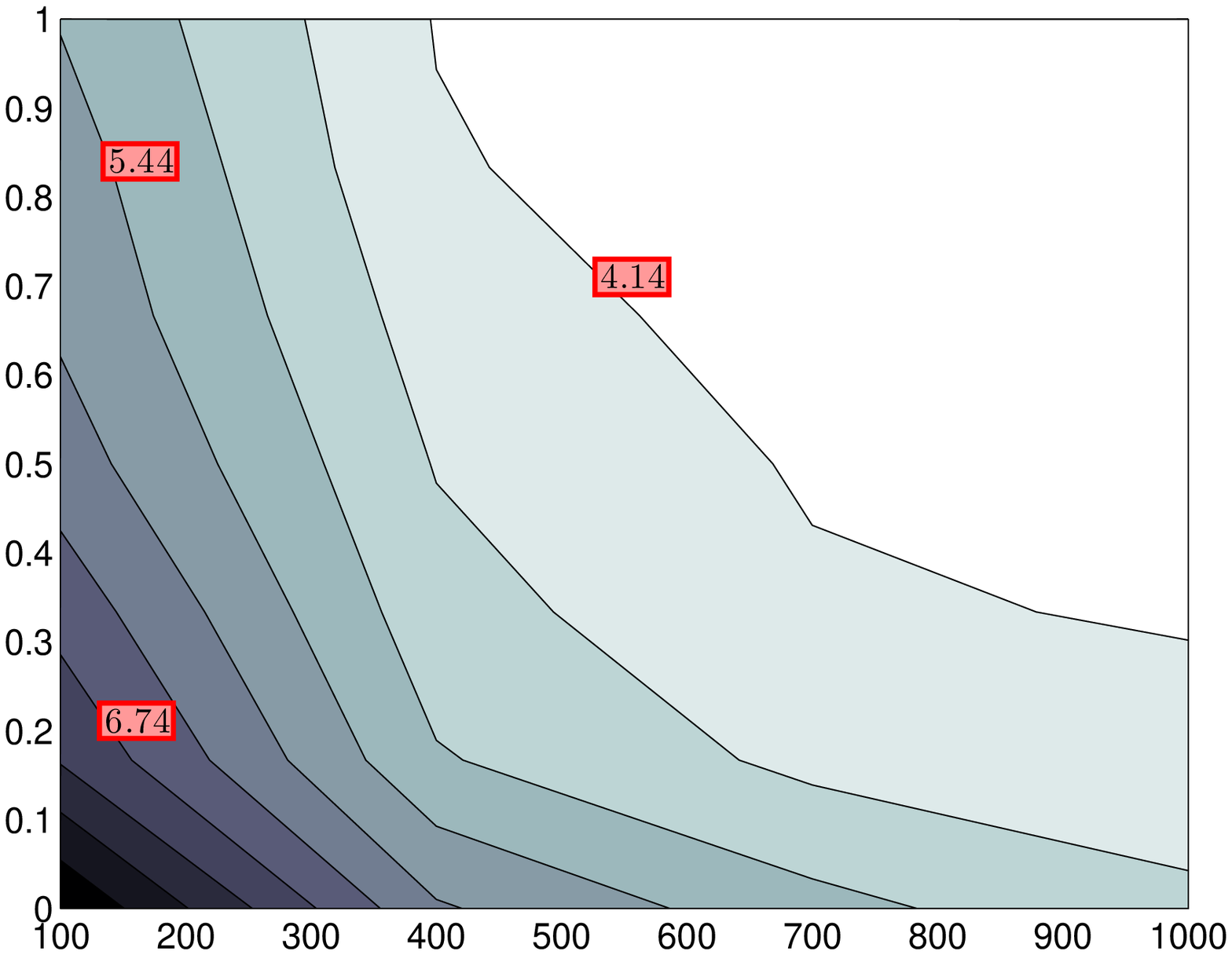}} & \hspace{-.20in}
	\includegraphics[width=.333\textwidth]{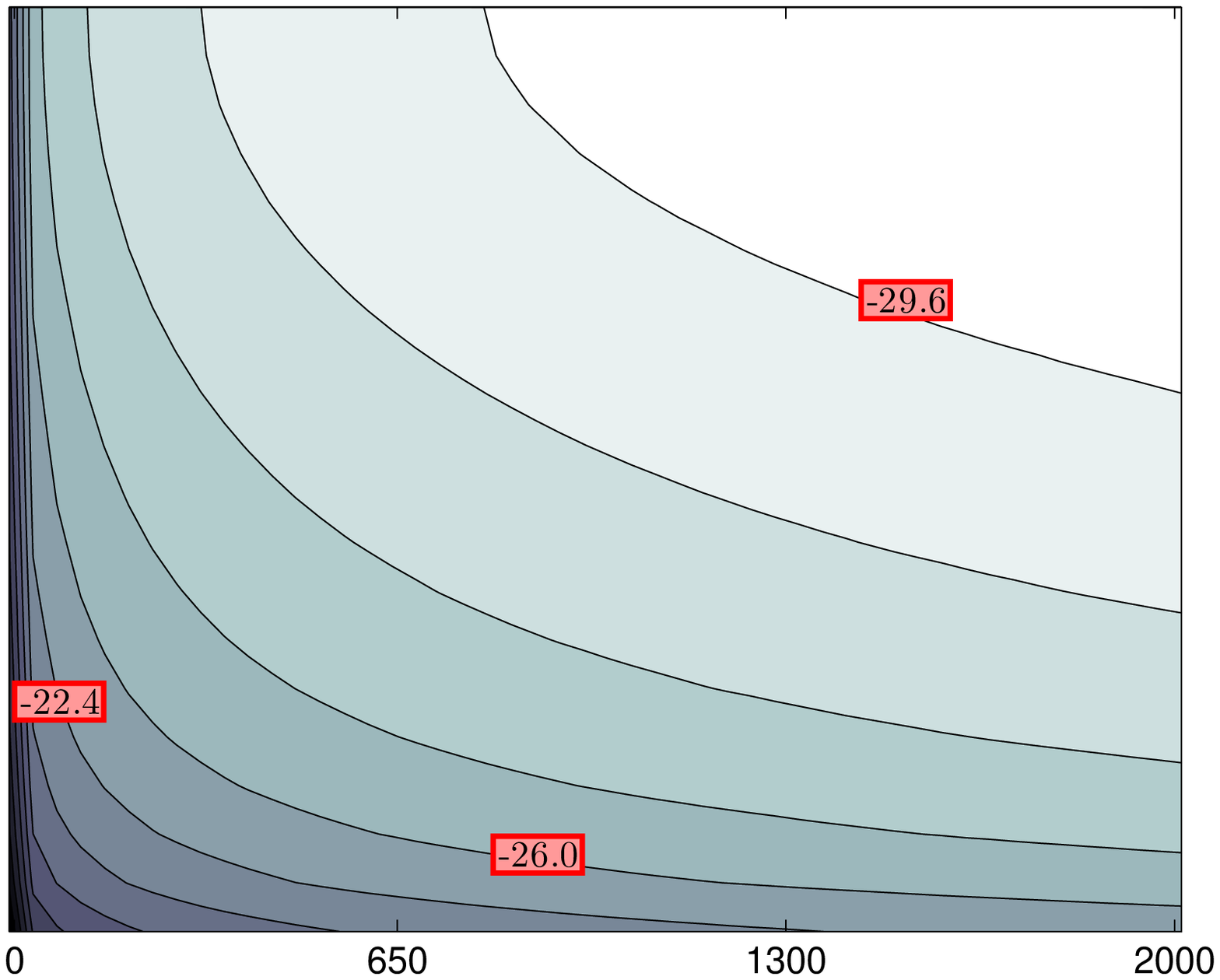}  & \hspace{-.20in}
	\includegraphics[width=.335\textwidth]{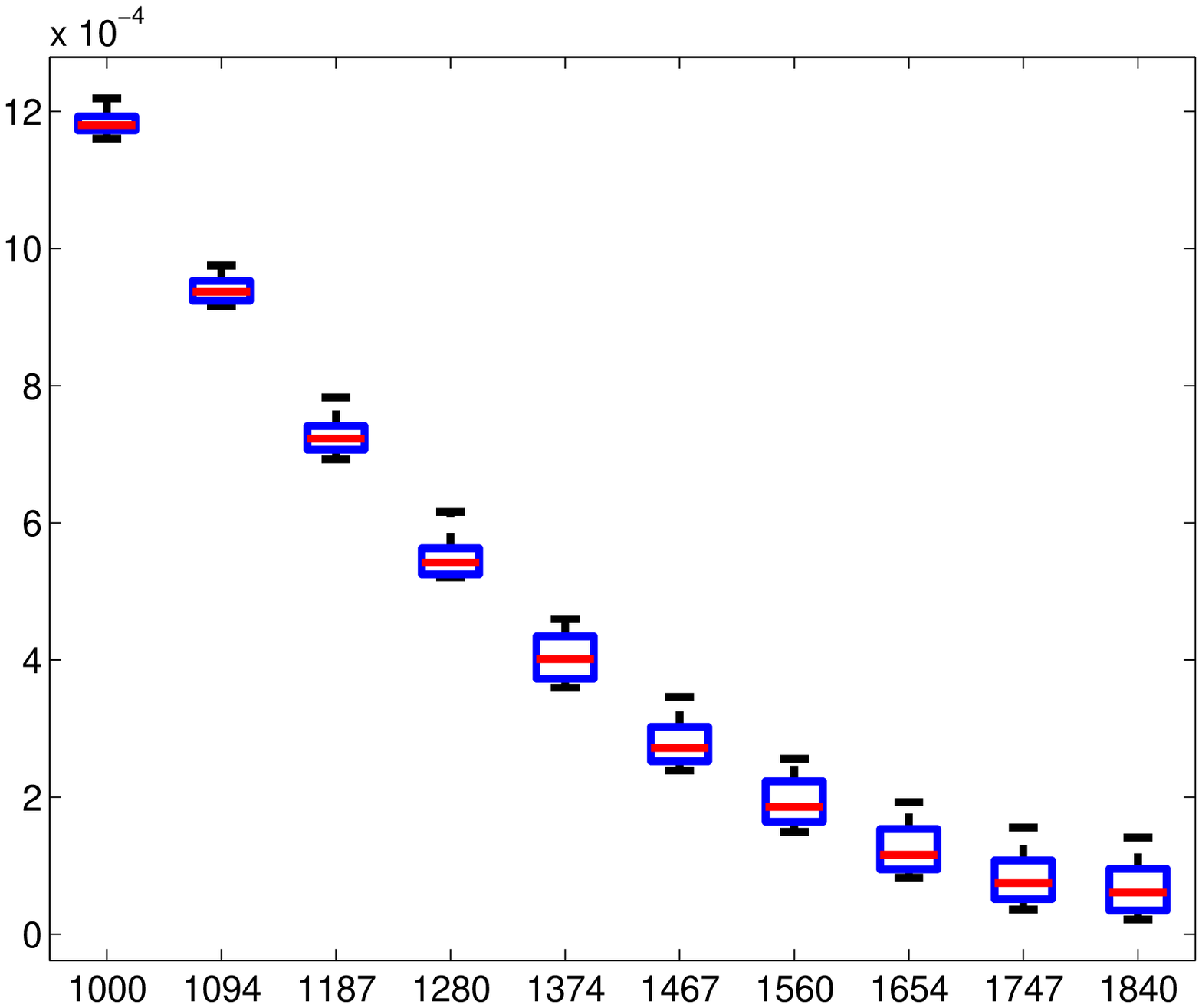}  
\end{tabular}
\caption{Left figure depicts sentence\correct{}{-wise} log-perplexity for CRFs
under the same policy and experimental design of the \correct{(directly)}{} above Boltzmann
chain.
Center figure \correct{exhibits the theoretical asymptotic variance for a
mixture of multinomials.} {represents log-trace of the theoretical variance
and demonstrates phenomena under a simplified scenario, i.e., a mixture of two
$1000$-dim multinomials with unbalanced prior.}
Rightmost figure demonstrates the practical applicability of utilizing
asymptotic analysis to \correct{quantify the extent additional labeling
facilitates improvement in accuracy}{characterize parameter error as a function
of size of training-set partition}.  \correct{}{The training-set is fixed at
$2000$ samples and split for training and validating. As the proportion used
for training is increased, we see a decrease in error. See text for more details.
}}
\label{fig:lorem3}
\end{figure*}

\section{A5: Tradeoff} \label{sec:A5}
As the figures in the previous sections display, the estimation accuracy increases with the total number of labels. The Cramer-Rao lower bound states that the highest accuracy is obtained by the maximum likelihood operating on fully observed data. However, assuming that a certain cost is associated with labeling data SSL resolves a fundamental accuracy-cost tradeoff. A decrease in estimation accuracy is acceptable in return for decreased labeling cost. 

Our ability to mathematically characterize the dependency of the estimation accuracy on the labeling cost leads to a new quantitative formulation of this tradeoff. Each labeling policy ($\lambda,n$ in classification and $\wp$ in structured prediction) is associated with a particular estimation accuracy via Propositions~\ref{prop:efficiency1} and \ref{prop:efficiency2} and with a particular labeling cost. The precise way to measure labeling cost depends on the situation at hand, but we assume in this paper that the labeling cost is proportional to the numbers of labeled samples (classification) and of labeled sequence elements  (structured prediction). This assumption may be easily relaxed by using other labeling cost functions e.g, obtaining unlabeled data may incur some cost as well.

Geometrically, each labeling policy may thus be represented in a two dimensional scatter plot where the horizontal and vertical coordinates correspond to labeling cost and estimation error respectively. Three such scatter plots appear in Figure~\ref{fig:lorem2} (see Section~\ref{sec:A4} for a description of the left and middle panels). 
The right panel corresponds to multinomial naive Bayes SSL classifier and the 20-newsgroups classification dataset. Each point in that panel  corresponds to different $n,\lambda$.  

The origin corresponds to the most desirable (albeit unachievable) position in the scatter plot representing zero error at no labeling cost. The cloud of points obtained by varying $n,\lambda$ (classification) and $\wp$ (structured prediction) represents the achievable region of the diagram. Most attractive is the lower and left boundary of that region which represents labeling policies that dominate others in both accuracy and labeling cost.  The non-achievable region is below and to the left of that boundary (see shaded region in Figure~\ref{fig:lorem2}, right). The precise position of the optimal policy on the boundary of the achievable region depends on the relative importance of minimizing estimation error and minimizing labeling cost. A policy that is optimal in one context may not be optimal in a different context.   

It is interesting to note that even in the case of naive Bayes classification (Figure~\ref{fig:lorem2}, right) some labeling policies (corresponding to specific choices of $n,\lambda$) are suboptimal. These policies correspond to points in the interior of the achievable region. A similar conclusion holds for Boltzmann chain MRF. For example, some of the points in Figure~\ref{fig:lorem2} (left) denoted by the label 700 are dominated by the more lightly shaded points.

We consider in particular three different ways to define an optimal labeling policy (i.e., determining how much data to label) on the boundary of the achievable region
\begin{align} \label{eq:tradeoff1}
(\lambda^*,n^*)_1 &= \argmin_{(\lambda,n):\lambda n \leq C} \trace(\Sigma^{-1})\\
(\lambda^*,n^*)_2 &= \argmin_{(\lambda,n):\trace(\Sigma^{-1}) \leq C} \lambda n \label{eq:tradeoff2}\\
(\lambda^*,n^*)_3 &= \argmin_{(\lambda,n)} \lambda n  + \alpha\, \trace(\Sigma^{-1}). \label{eq:tradeoff3}
\end{align}
The first applies in situations where the labeling cost is bounded by a certain available budget. The second applies when a certain estimation accuracy is acceptable and the goal is to minimize the labeling cost. The third considers a more symmetric treatment of the estimation accuracy and labeling cost.

Equations~\eqref{eq:tradeoff1}-\eqref{eq:tradeoff3} may be easily generalized to arbitrary labeling costs $f(n,\lambda)$.  Equations~\eqref{eq:tradeoff1}-\eqref{eq:tradeoff3} may also be generalized to the case of structured prediction with $\wp$ replacing  $(\lambda,n)$ and $\text{cost}(\wp)$ replacing $\lambda n$.

\section{A6: Practical Algorithms}
Choosing a policy $(\lambda,n)$ or $\wp$ resolves the SSL tradeoff of accuracy vs. cost. Such a resolution is tantamount to answering the basic question of how many labels should be obtained (and in the case of structured prediction also which ones). Resolving the tradeoff via \eqref{eq:tradeoff1}-\eqref{eq:tradeoff3} or in any other way, or even simply evaluating the asymptotic accuracy $\trace(\Sigma)$ requires knowledge of the model parameter $\theta_0$ that is generally unknown in practical settings.
  
We propose in this section a practical two stage algorithm for computing an estimate $\hat\theta_n$ within a particular accuracy-cost tradeoff. Assuming we have $n$ unlabeled examples, the algorithm begins the first stage by labeling $r$ samples. It then 
estimates $\theta'$ by maximizing the likelihood over the $r$ labeled and $n-r$
unlabeled samples. The estimate $\hat\theta'$ is then used to obtain a plug-in
estimate for the asymptotic accuracy
\correct{{$\widehat{\trace(\Sigma)}$}}{{$\trace(\Sigma)$}}. In the second stage the
algorithm uses the estimate \correct{{$\widehat{\trace{\Sigma}}$}}{{$\widehat{\trace(\Sigma)}$}} to resolve the
tradeoff via  \eqref{eq:tradeoff1}-\eqref{eq:tradeoff3} and determine how many
more labels should be collected. Note that the labels obtained at the first
stage may be used in the second stage as well with no adverse effect. 

The two-stage algorithm spends some initial labeling cost in order to obtain an estimate for the quantitative tradeoff parameters. The final labeling cost, however, is determined in a principled way based on the relative importance of accuracy and labeling cost via \eqref{eq:tradeoff1}-\eqref{eq:tradeoff3}. The selection of the initial number of labels $r$ is important and should be chosen carefully. In particular it should not exceed the total desirable labeling cost. 

We provide some experimental results on the performance of this algorithm in
Figure~\ref{fig:lorem3} (right). It displays box-plots for the differences
between $\trace(\Sigma)$ and $\widehat{\trace(\Sigma)}$ as a function of the
initial labeling cost $r$ for naive Bayes SSL classifier and 20-newsgroups
data. The figure illustrates that the two stage algorithm provides a very
accurate estimation of \correct{{$\trace{\Sigma}$}}{{$\trace(\Sigma)$}} for $r\geq 1000$ which becomes almost
perfect for $r\geq 1300$.   

{\bibliographystyle{plain}
\bibliography{../../common/externalPapers,../../common/groupPapers}}
\end{document}